\documentclass[11pt]{article}
\usepackage[T1]{fontenc}
\usepackage[margin=1in]{geometry}
\usepackage{booktabs}
\usepackage{longtable}
\usepackage{amsmath}
\usepackage{amssymb}
\usepackage{amsthm}
\usepackage{graphicx}
\usepackage{hyperref}

\newtheorem{proposition}{Proposition}

\title{ZIVARI-TLBO: A Zero-Cost Inter-Group Evaluated-Elite Relay Mechanism for Teaching--Learning-Based Optimization}
\author{Pezhman Zivari\\
Independent Researcher, Istanbul, T{\"u}rkiye\\
\href{mailto:pezhmanzivarizm@gmail.com}{\texttt{pezhmanzivarizm@gmail.com}}\\
ORCID: \href{https://orcid.org/0009-0001-4088-0589}{0009-0001-4088-0589}\\
Corresponding author: Pezhman Zivari}
\date{}

\hypersetup{
pdftitle={ZIVARI-TLBO: A Zero-Cost Inter-Group Evaluated-Elite Relay Mechanism for Teaching--Learning-Based Optimization},
pdfauthor={Pezhman Zivari},
pdfsubject={Teaching--Learning-Based Optimization; metaheuristic optimization; evaluated-elite relay; black-box optimization},
pdfkeywords={ZIVARI-TLBO, TLBO, teaching-learning-based optimization, metaheuristics, evolutionary computation, black-box optimization, evaluated-elite relay, zero-cost relay}
}

\begin{document}
\maketitle

\begin{abstract}
ZIVARI-TLBO is a grouped Teaching--Learning-Based Optimization (TLBO) method that augments an existing population-state controller with a fixed inter-group evaluated-elite relay. At each scheduled event, every group offers its already evaluated elite to the next group in a fixed ring; the elite replaces the receiver's worst eligible learner only when its stored objective value is better. Because the exact relay copies an already evaluated solution and its stored fitness, it requires no additional objective-function calls. The frozen \texttt{gts-v4-cm-fixed} implementation is evaluated under equal 10,000-evaluation budgets on eight classical functions at dimensions 10, 30, 50, and 100, with 30 matched seeds, and on five constrained engineering problems. A direct ablation against the same grouped landscape-aware controller without relay records 728/11/221 wins/ties/losses and a rank-biserial effect size of 0.624 across dimensions. In an eight-method multidimensional comparison, WOA obtains the best average rank (2.914) and ZIVARI-TLBO ranks second (3.382); ZIVARI-TLBO significantly outperforms TLBO, MCTLBO, DE, PSO, and GWO, loses significantly to WOA, and is not significantly different from HHO after Holm adjustment. Feasibility-aware engineering results are mixed and sensitive to the current static-penalty formulation. The evidence supports a scoped relay contribution and budget-consistent information-sharing mechanism, but not universal state-of-the-art, global-convergence, engineering-dominance, or CEC superiority claims.
\end{abstract}

\section{Introduction}
Teaching--Learning-Based Optimization (TLBO) is a population-based optimizer organized around teacher and learner phases and requires few algorithm-specific parameters \cite{rao2011tlbo,zou2019survey}. Grouped and multi-population extensions can preserve distinct search trajectories, but their effectiveness depends on how information moves between groups. Information exchange that is too weak can leave groups isolated, whereas aggressive transfer can reduce useful diversity. A second concern is experimental fairness: migration operators that generate and evaluate new candidates consume objective-function calls and must be counted within the same budget as the baseline methods.

This study investigates a deliberately conservative exchange mechanism. ZIVARI-TLBO retains the grouped search and population-state controller of the preceding development line, then adds a fixed ring relay that copies already evaluated group elites. The relay changes the receiving group's search state without generating a new objective evaluation. This distinction is important: all candidates generated by TLBO phases or an activated controller remain budget-counted; only the exact evaluated-elite copy is zero-cost in objective-function calls.

The contributions are:
\begin{enumerate}
  \item a fixed ring evaluated-elite relay that transfers a source-group elite to the worst eligible learner of the next group using stored objective values;
  \item an implementation-level budget audit demonstrating zero additional objective-function calls for the exact relay operation;
  \item a direct relay ablation against the same grouped landscape-aware controller without relay, using matched seeds and equal function-evaluation budgets;
  \item a multidimensional evaluation against TLBO, MCTLBO, DE, PSO, GWO, WOA, and HHO, plus a feasibility-aware constrained engineering audit;
  \item a statistically cautious analysis using paired outcomes, Wilcoxon signed-rank tests, Holm adjustment, rank-biserial effect sizes, and Friedman ranks where the comparison structure supports them; and
  \item a reproducibility package that preserves raw evidence, standardized data, plotting and table scripts, claim limitations, and directly inspected archives.
\end{enumerate}

\section{Related Work}
The original TLBO formulation demonstrated a compact teacher--learner search process for constrained mechanical design \cite{rao2011tlbo}, and a later survey documents the breadth of TLBO extensions and applications \cite{zou2019survey}. Grouped TLBO research has explored dynamic grouping, fuzzy grouping, multi-population cooperation, group-individual cooperation, reinforcement learning, and blended learning strategies \cite{zou2014dgstlbo,zhai2015fgtlbo,liao2023mctlbo,chen2023ctlbo,wu2022rltlbo,ma2024bltlbo}. More broadly, cooperative coevolution and island-model research show that decomposition, migration topology, and migration interval can materially alter population search behavior \cite{potter2000cc,skolicki2005migration}.

ZIVARI-TLBO is positioned narrowly within this literature. Its paper-facing contribution is not a claim that grouping, migration, or elite sharing is new in general. The contribution is the exact definition, budget audit, and controlled empirical evaluation of a fixed ring \emph{evaluated-elite} relay inside a frozen grouped TLBO framework. Unlike perturbative or newly generated transfer, the exact relay does not construct a new candidate and therefore does not require an objective-function call.

The multidimensional comparison includes the original or established population-based baselines DE, PSO, GWO, WOA, and HHO \cite{storn1997de,kennedy1995pso,mirjalili2014gwo,mirjalili2016woa,heidari2019hho}. It is intentionally not described as comprehensive modern state-of-the-art coverage: verified implementations of CMA-ES, SHADE, L-SHADE, and jSO are absent from the repository, although CMA-ES and success-history DE variants are important future comparators \cite{hansen2001cmaes,tanabe2013shade,tanabe2014lshade}. For constrained engineering, the study reports both the current static-penalty outcomes and a separate feasibility-first audit motivated by established constraint-handling principles \cite{deb2000constraints,coello2002constraints}.

\section{Proposed ZIVARI-TLBO}
\subsection{Method Overview}
ZIVARI-TLBO is the paper-facing name of the frozen \texttt{gts-v4-cm-fixed} implementation. The method partitions the population into groups and applies teacher and learner updates locally. It retains an inherited landscape-aware controller whose activation is inferred from population behavior, including improvement slope, diversity slope, group-best dispersion, and elite gap. No benchmark name is used by the activation logic. Any new candidates generated by an activated controller are evaluated and counted against the common function-evaluation budget.

The final method adds a fixed relay policy and an exact-copy relay mode. At every scheduled relay event, groups are paired in a ring, $g\rightarrow(g+1)\bmod G$. For each pair, the best already evaluated learner in the source group is compared with the worst eligible learner in the target group. The target excludes the current global-best position. If the stored source objective is better, both the source solution and its stored objective value replace the target; otherwise, no replacement occurs. Section~\ref{sec:formal-relay} formalizes this operation and its evaluation-budget interpretation.

\begin{quote}
ZIVARI-TLBO, standing for Zero-Cost Inter-Group Evaluated-Elite Relay for Teaching--Learning-Based Optimization, is implemented in the codebase as \verb|`gts-v4-cm-fixed`|.
\end{quote}

\subsection{Formal Definition and Evaluation-Cost Neutrality of the Relay Mechanism}
\label{sec:formal-relay}
Consider the continuous minimization problem
\begin{equation}
\min_{\mathbf{x}\in\Omega} f(\mathbf{x}),
\label{eq:problem}
\end{equation}
where $\mathbf{x}\in\mathbb{R}^{D}$ is a $D$-dimensional candidate solution, $\Omega$ is the feasible search domain, and $f(\cdot)$ is the objective function to be minimized. At iteration $t$, the population is
\begin{equation}
P^{(t)} =
\left\{
\mathbf{x}_1^{(t)},\mathbf{x}_2^{(t)},\ldots,\mathbf{x}_N^{(t)}
\right\},
\label{eq:population}
\end{equation}
where $N$ is the population size and each $\mathbf{x}_i^{(t)}$ is a candidate vector in the $D$-dimensional search space. The population is partitioned into $G$ mutually disjoint groups:
\begin{equation}
P^{(t)} = \bigcup_{g=1}^{G} P_g^{(t)},
\qquad
P_i^{(t)} \cap P_j^{(t)} = \emptyset
\quad \text{for } i\neq j.
\label{eq:partition}
\end{equation}
Each group acts as a local learning subpopulation. This structure is intended to let different groups search different regions while retaining local teacher--learner pressure. It can help maintain distinct search trajectories relative to a single global teacher and may reduce premature concentration within the adopted experimental protocol; it does not guarantee diversity, escape from local optima, or global convergence.

ZIVARI-TLBO can therefore be interpreted as a grouped TLBO framework. Within each $P_g^{(t)}$, teacher and learner phases are performed locally, and the best individual serves as the local teacher or elite. The group elite is defined by
\begin{equation}
\mathbf{e}_g^{(t)}
=
\arg\min_{\mathbf{x}\in P_g^{(t)}} f(\mathbf{x}).
\label{eq:group-elite}
\end{equation}
Importantly, $\mathbf{e}_g^{(t)}$ is not a newly generated relay candidate. It is an existing member of group $g$ whose objective value has already been computed before the relay operation.

The frozen implementation protects the current global-best position from being overwritten. Let $\widetilde{P}_b^{(t)}$ denote the eligible members of receiver group $b$ after excluding that protected position. For a minimization problem, the receiver target is
\begin{equation}
\mathbf{w}_b^{(t)}
=
\arg\max_{\mathbf{x}\in \widetilde{P}_b^{(t)}} f(\mathbf{x}).
\label{eq:receiver-worst}
\end{equation}
If the eligible set is empty, no relay replacement is attempted for that receiver. Otherwise, $\mathbf{w}_b^{(t)}$ is the worst eligible receiver member and may be replaced only when the incoming evaluated elite is better.

For a sender group $a$ and receiver group $b$, the relayed solution is defined as
\begin{equation}
\mathbf{r}_{a\rightarrow b}^{(t)}
=
\mathbf{e}_a^{(t)}.
\label{eq:relay}
\end{equation}
This relay is an exact evaluated-elite copy. It is not mutation, perturbation, interpolation, noisy transfer, recombination, or a soft update. The relayed vector $\mathbf{r}_{a\rightarrow b}^{(t)}$ is identical to the already evaluated elite $\mathbf{e}_a^{(t)}$. Therefore, the relay operation does not construct a new point in the search space; it transfers existing evaluated information between groups. The stored objective value is transferred together with the solution:
\begin{equation}
f\!\left(\mathbf{r}_{a\rightarrow b}^{(t)}\right)
=
f\!\left(\mathbf{e}_a^{(t)}\right).
\label{eq:stored-fitness}
\end{equation}
This equality is valid because $\mathbf{r}_{a\rightarrow b}^{(t)}$ is exactly the same evaluated vector as $\mathbf{e}_a^{(t)}$, not a newly generated candidate. The evaluated-elite relay operation does not require additional objective-function calls because it transfers an already evaluated elite solution between groups.

Using the stored values, the greedy replacement rule is
\begin{equation}
\begin{aligned}
&\text{if }\;
f\!\left(\mathbf{e}_a^{(t)}\right)
<
f\!\left(\mathbf{w}_b^{(t)}\right):
&&
\mathbf{w}_b^{(t)} \leftarrow \mathbf{e}_a^{(t)},
\quad
f\!\left(\mathbf{w}_b^{(t)}\right)
\leftarrow
f\!\left(\mathbf{e}_a^{(t)}\right);
\\
&\text{otherwise:}
&&
P_b^{(t)}\text{ is unchanged.}
\end{aligned}
\label{eq:greedy-replacement}
\end{equation}
This rule prevents the relay from injecting a worse solution into the receiver group under minimization. The comparison and accepted replacement use already available objective values and therefore do not call $f(\cdot)$.

The fixed schedule in the frozen implementation forms a cyclic ring. With groups indexed from $1$ to $G$, sender $a$ communicates with
\begin{equation}
b = (a \bmod G) + 1.
\label{eq:cyclic-relay}
\end{equation}
At each scheduled event, all nonempty ring pairs are considered sequentially in this fixed order; the archived default schedules a relay event after every group epoch. The source elite is read from the current group state when its pair is processed. Consequently, an exact evaluated elite accepted by one group may be relayed onward later in the same ring event, still with its valid stored objective value and without a new objective-function call. The schedule does not adapt sender or receiver selection to benchmark identity or population-state diagnostics.

In general migration or perturbative transfer schemes, the availability of a stored fitness value depends on whether the transferred entity is an already evaluated solution or a newly generated candidate. In ZIVARI-TLBO, this availability is guaranteed by design because the relay transfers the evaluated elite together with its stored objective value. General migration may transfer existing individuals, modified individuals, or newly generated candidates, depending on the design. If mutation, perturbation, interpolation, recombination, or noise modifies a transferred vector, its objective value is not available in advance and it must be evaluated. In contrast, ZIVARI-TLBO deliberately transfers the exact evaluated elite without modification, so its stored objective value remains valid. Table~\ref{tab:relay-vs-migration} summarizes this distinction without implying that all migration schemes require new evaluations.

\begin{table}[ht]
\centering
\footnotesize
\begin{tabular}{@{}p{0.30\linewidth}p{0.31\linewidth}p{0.29\linewidth}@{}}
\toprule
\textbf{Property} & \textbf{General migration or perturbative transfer} & \textbf{ZIVARI evaluated-elite relay} \\
\midrule
Inter-group information exchange & Yes & Yes \\
Uses elite information & Sometimes & Yes, group elite \\
May generate a new candidate & Yes, depending on design & No \\
May use perturbation/noise/interpolation & Yes, depending on design & No \\
Stored fitness value guaranteed & Not necessarily & Yes \\
Additional objective-function call required & Possibly & No \\
Zero-cost with respect to NFE & Only if an already evaluated solution is transferred without modification & Yes, by design \\
\bottomrule
\end{tabular}
\caption{Careful distinction between general migration or perturbative transfer and the evaluated-elite relay used by frozen ZIVARI-TLBO.}
\label{tab:relay-vs-migration}
\end{table}

\begin{proposition}[Objective-evaluation neutrality of the ZIVARI-TLBO relay]
\label{prop:evaluation-neutrality}
The evaluated-elite relay operation in ZIVARI-TLBO is objective-evaluation neutral.
\end{proposition}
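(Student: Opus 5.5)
To prove Proposition~\ref{prop:evaluation-neutrality}, I will first make precise what ``objective-evaluation neutral'' means: writing $\mathrm{NFE}$ for the counter of calls to $f(\cdot)$, the relay event maps a state $(P^{(t)},\{f(\mathbf{x})\}_{\mathbf{x}\in P^{(t)}},\mathrm{NFE})$ to a new state with the same $\mathrm{NFE}$. In addition, every stored fitness in the new state equals the true objective value of its associated vector. The proof then decomposes one relay event into its elementary steps and shows that no step calls $f$ while the invariant ``stored value $=$ true value'' is preserved. I will assume as the starting hypothesis that this invariant holds before the event, since every member was evaluated (and counted) when it was generated by a TLBO phase or by the controller.

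The elementary steps of a single ring pair $a\rightarrow b=(a\bmod G)+1$ are handled in order.
\begin{enumerate}
\item Selecting $\mathbf{e}_a^{(t)}$ via \eqref{eq:group-elite} is an argmin over stored values of a finite set, so it only reads memory.
\item Forming $\widetilde{P}_b^{(t)}$ and selecting $\mathbf{w}_b^{(t)}$ via \eqref{eq:receiver-worst} is likewise an argmax over stored values; the empty case performs nothing.
\item The comparison in \eqref{eq:greedy-replacement} uses the two stored numbers $f(\mathbf{e}_a^{(t)})$ and $f(\mathbf{w}_b^{(t)})$.
\item On acceptance, the vector and its value are copied. By \eqref{eq:relay} the new vector is identical to $\mathbf{e}_a^{(t)}$, so \eqref{eq:stored-fitness} shows that the copied number is the true objective value of the new member. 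The invariant therefore survives without a call to $f$.
\end{enumerate}
Each step leaves $\mathrm{NFE}$ unchanged.

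Next I will extend the argument from one pair to a full ring event by induction over the $G$ pairs processed sequentially. The inductive hypothesis is the invariant together with an unchanged $\mathrm{NFE}$. The only subtlety is that an elite received earlier in the same event may be read as a source later. The invariant handles this: its stored value is exact by the previous step, so steps 1--4 apply verbatim. A second induction over scheduled events, interleaved with the budget-counted TLBO and controller phases, then shows that the relay contributes zero evaluations to the total budget. Consequently, a run with relay and a run without relay under the same $\mathrm{NFE}$ cap differ only in which counted candidates are generated, not in how many are counted.

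I expect the main obstacle to be not mathematical but definitional. I must pin down the formal model so that ``neutral'' is a theorem and not a tautology. Concretely, $f$ must be deterministic, so that a stored value remains the true value when the same vector is placed elsewhere; with a noisy $f$, \eqref{eq:stored-fitness} would need reinterpretation. The copy must also be exact, which rules out any repair or bound clipping applied to a relayed vector. I will state both as explicit assumptions, matching the paper's exact-copy mode. I will also remark that the claim is about the relay operation only and says nothing about the evaluations consumed by subsequent search from the altered state.
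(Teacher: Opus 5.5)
Your proposal is correct and follows the same route as the paper's proof. Every step of a relay event only reads, compares, or copies stored objective values, and the relayed vector is an exact copy of an already evaluated elite, so no call to $f$ occurs and $\mathrm{NFE}$ is unchanged across the relay. Your stored-value invariant, the induction over sequentially processed ring pairs (covering an elite relayed onward within the same event), and the explicit determinism and exact-copy assumptions make rigorous what the paper asserts in prose around Eq.~\eqref{eq:stored-fitness} and in its description of the ring schedule.
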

\begin{proof}
Let $NFE_{\mathrm{before\ relay}}^{(t)}$ be the number of objective-function evaluations immediately before applying the relay at iteration $t$, and let $NFE_{\mathrm{after\ relay}}^{(t)}$ be the corresponding number immediately afterward. By Eqs.~\eqref{eq:relay} and \eqref{eq:stored-fitness}, the relay copies an already evaluated elite vector and its stored objective value. The comparison in Eq.~\eqref{eq:greedy-replacement} also uses stored values. Since no call to $f(\cdot)$ is made during the relay operation,
\begin{equation}
NFE_{\mathrm{after\ relay}}^{(t)}
=
NFE_{\mathrm{before\ relay}}^{(t)}.
\end{equation}
Therefore,
\begin{equation}
\Delta NFE_{\mathrm{relay}}
=
NFE_{\mathrm{after\ relay}}^{(t)}
-
NFE_{\mathrm{before\ relay}}^{(t)}
=
0.
\end{equation}
\end{proof}

If $m$ evaluated elite vectors of dimension $D$ are relayed, their bookkeeping and vector-copying cost is $O(mD)$. If at most one elite is relayed per group in an event, then $m\leq G$, and this cost is bounded by $O(GD)$. The number of additional objective-function evaluations remains zero. The relay operation may incur a small bookkeeping or vector-copying cost, but this cost is distinct from objective-function evaluation cost. In black-box optimization, the dominant budget is commonly measured by the number of objective-function evaluations; with respect to this budget, the evaluated-elite relay is neutral.

ZIVARI-TLBO deliberately uses an exact evaluated-elite relay rather than adaptive perturbation, noisy transfer, soft interpolation, or new-candidate generation. Perturbative or soft-transfer variants may be interesting for future work, but they generally create modified candidate vectors whose objective values are not available in advance. Such variants would require additional objective-function evaluations and would therefore fall outside the zero-cost relay mechanism studied in this paper. These variants are neither part of the present method nor validated by its experiments.

Because the relayed elite is already present elsewhere in the population, the copy cannot immediately improve the stored global-best objective. Its intended role is instead to alter the receiving group's composition so that subsequent local teacher and learner updates can exploit information discovered by another group. This analytical result proves evaluation-cost neutrality of the relay operation only; performance advantages must be interpreted empirically within the adopted benchmark suites and equal-budget experimental protocol.

\subsection{Pseudocode}
Algorithm~\ref{alg:zivari-tlbo} states the frozen workflow and distinguishes budget-counted candidate generation from the zero-call relay.

\begin{table}[ht]
\centering
\small
\begin{tabular}{p{0.96\linewidth}}
\toprule
\textbf{Algorithm 1: ZIVARI-TLBO with fixed ring evaluated-elite relay} \\
\midrule
\textbf{Input:} objective function $f$, bounds, population size $N$, groups $G$, maximum evaluation budget $B$, relay period $\tau$. \\
1. Initialize $N$ learners uniformly within the search bounds and evaluate each learner. \\
2. Partition the population into $G$ learner groups. \\
3. While the evaluation budget is not exhausted, repeat the following steps. \\
4. For each group, run the TLBO teacher phase locally using the group teacher and group mean. Evaluate only newly generated candidates and accept improvements according to the implemented replacement rule. \\
5. For each group, run the TLBO learner phase locally through pairwise learner interactions. Evaluate only newly generated candidates and accept improvements according to the implemented replacement rule. \\
6. At the end of the epoch, infer whether the inherited landscape-aware controller should activate from population-state signals; count every candidate evaluation produced by an activated controller against $B$. \\
7. Every $\tau$ epochs, form fixed ring pairs $a\rightarrow b$, where $b=(a\bmod G)+1$, and process them sequentially in fixed order. \\
8. For each pair, identify the already evaluated elite $\mathbf{e}_a^{(t)}$ from the current source-group state and use its stored value $f(\mathbf{e}_a^{(t)})$. Identify the worst eligible non-global-best receiver member $\mathbf{w}_b^{(t)}$ and compare their stored objective values. \\
9. If $f(\mathbf{e}_a^{(t)})<f(\mathbf{w}_b^{(t)})$, copy both $\mathbf{e}_a^{(t)}$ and its stored objective value into the receiver position; otherwise leave the receiver unchanged. Do not call $f$ during relay. \\
10. Update group bests and the global best solution using stored objective values. \\
11. Stop immediately when the evaluation budget $B$ is reached. \\
\textbf{Output:} the best evaluated solution and its objective value. \\
\bottomrule
\end{tabular}
\caption{Pseudocode aligned with the frozen ZIVARI-TLBO implementation. The code alias is \texttt{gts-v4-cm-fixed}; the zero-cost property applies to the exact relay operation, while all generated candidates remain budget-counted.}
\label{alg:zivari-tlbo}
\end{table}

\section{Complexity Analysis}
For population size $N$, dimension $D$, number of groups $G$, and $T$ iterations, local TLBO updates have arithmetic cost proportional to $O(TND)$ in addition to objective-function evaluations. Population-state diagnostics and controller operations add bookkeeping and, when they generate candidates, budget-counted evaluations. As established in Proposition~\ref{prop:evaluation-neutrality}, relaying $m$ evaluated elites costs $O(mD)$ for vector copying, is bounded by $O(GD)$ when $m\leq G$, and satisfies $\Delta NFE_{\mathrm{relay}}=0$. Table~\ref{tab:zivari-complexity-budget} summarizes this distinction.

\begin{table}[ht]
\centering
\small
\begin{tabular}{p{0.23\linewidth}p{0.70\linewidth}}
\toprule
Item & Paper-ready statement \\
\midrule
Base TLBO update & Teacher and learner phases require arithmetic-level update cost proportional to $O(TND)$, plus objective-function calls. \\
Grouping overhead & Group partitioning and group-level bookkeeping add lightweight overhead dependent on the number of groups. \\
Landscape-aware controller & Population-state diagnostics and any activated controller operations are inherited from the pre-v4 grouped framework; every objective evaluation generated by these operations is counted against the common budget. \\
Evaluated-elite relay & The evaluated-elite relay operation does not require additional objective-function calls because it transfers an already evaluated elite solution between groups. \\
Relay bookkeeping & Relaying $m$ evaluated $D$-dimensional elites requires $O(mD)$ vector-copying work; if $m\leq G$, this is bounded by $O(GD)$. \\
Relay evaluation delta & The exact relay uses stored objective values, so $\Delta NFE_{\mathrm{relay}}=0$. \\
Budget control & All methods are compared under the same function-evaluation budget. \\
Earlier ablations & Soft or noisy transfer variants consumed evaluations when they created new candidates. \\
Final method & ZIVARI-TLBO uses the fixed exact evaluated-elite relay mechanism. \\
\bottomrule
\end{tabular}
\caption{Complexity and evaluation-budget interpretation for ZIVARI-TLBO.}
\label{tab:zivari-complexity-budget}
\end{table}

Figure~\ref{fig:runtime-budget} reports descriptive runtime measurements for the expanded dimension-30 layer. Hardware context is unavailable, so the figure is not interpreted as a platform-independent complexity comparison.

\begin{figure}[ht]
\centering
\includegraphics[width=0.95\linewidth]{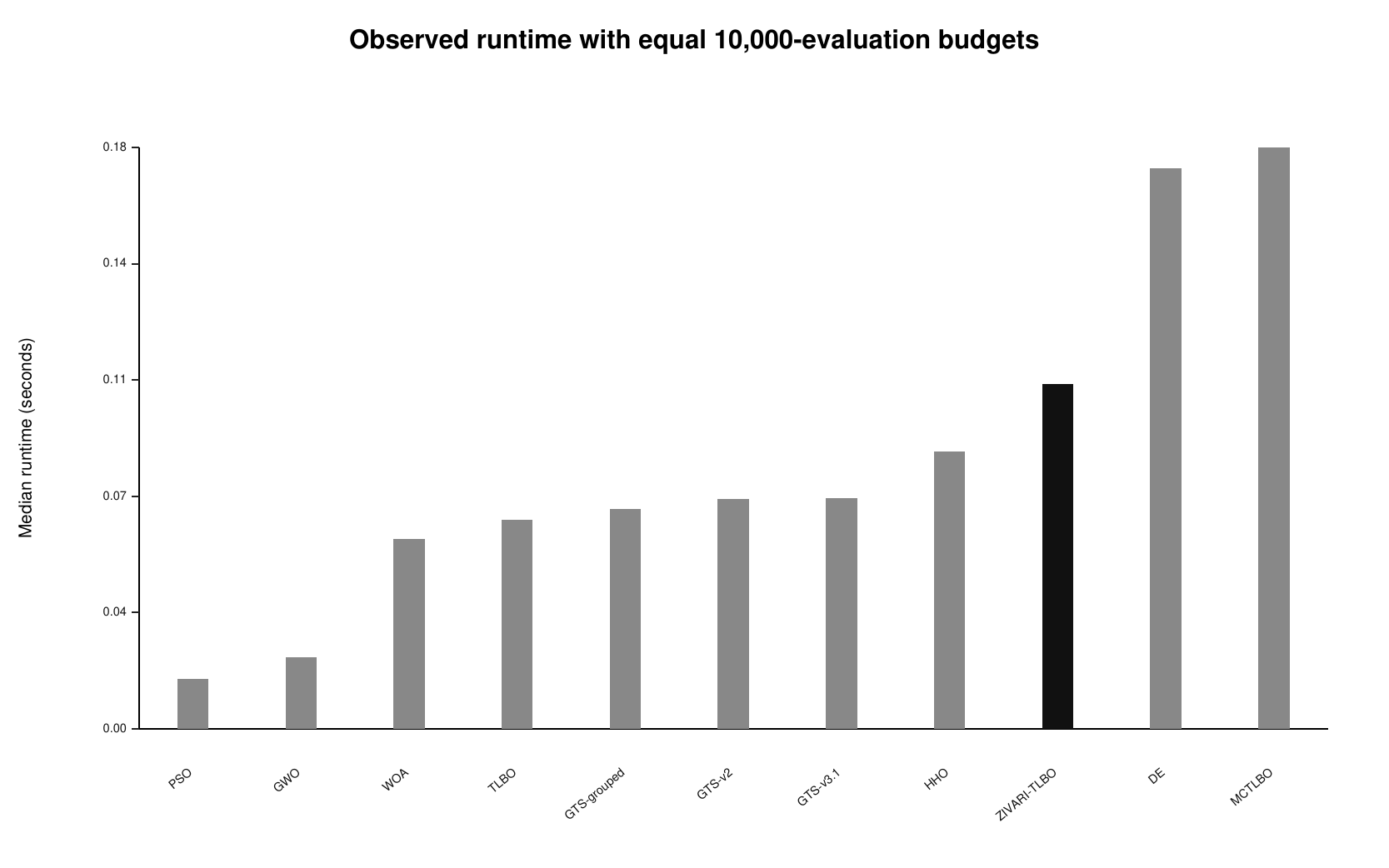}
\caption{Observed median runtime for the expanded dimension-30 classical layer under equal 10,000-evaluation budgets. Hardware context is unspecified; runtime is descriptive only.}
\label{fig:runtime-budget}
\end{figure}

\section{Experimental Setup}
All comparisons use an equal maximum objective-function evaluation budget. Unless otherwise stated, each run uses population size 60, 30 independent seeds, a 1,000-iteration safety limit, and a maximum of 10,000 objective evaluations; the evaluation budget is the operative stopping rule. Lower objective values and lower ranks are better.

For the frozen ZIVARI-TLBO configuration, the archived defaults use five groups, a five-iteration group epoch, fixed relay at every group epoch ($\tau=1$), exact evaluated-elite copy, and all fixed ring pairs. Remaining controller thresholds and variant defaults are preserved in \texttt{TI.py}.

The classical suite contains Ackley, Griewank, Levy, Rastrigin, Rosenbrock, Schwefel, Sphere, and Zakharov at dimensions 10, 30, 50, and 100. The direct relay ablation compares ZIVARI-TLBO with the existing \texttt{gts-v3.1-landscape-aware} condition, reported as ``ZIVARI-TLBO without relay.'' Programmatic default comparison confirms that the conditions share the grouped landscape-aware controller and all common settings; the evaluated-elite relay is the controlled difference. The broader multidimensional comparison includes ZIVARI-TLBO, TLBO, MCTLBO, DE, PSO, GWO, WOA, and HHO. All matched comparisons use the same functions, seeds, population size, evaluation budget, stopping rule, and boundary handling.

The engineering layer contains welded beam, pressure vessel, tension/compression spring, speed reducer, and three-bar truss design. It uses the current static quadratic penalty with scale $10^7$ and reports objective values, penalized ranks, strict-feasibility rates at tolerance $10^{-6}$, and constraint-violation summaries. A Deb-style feasibility-first ranking is added only as an external analysis layer; it does not modify optimizer behavior or candidate acceptance. Recorded timing hardware and operating-system details are unspecified. No verified CEC implementation, official data package, or validated wrapper is present, so CEC results are not included.

Table~\ref{tab:experimental-protocol} summarizes the principal comparison scopes. The same random-seed sequence, population size, and evaluation budget are used within each matched scope.

\begin{table}[ht]
\centering
\footnotesize
\begin{tabular}{@{}p{0.23\linewidth}p{0.20\linewidth}p{0.37\linewidth}p{0.10\linewidth}@{}}
\toprule
\textbf{Layer} & \textbf{Dimensions / problems} & \textbf{Compared conditions} & \textbf{Runs} \\
\midrule
Direct relay ablation & $D=10,30,50,100$; eight classical functions & ZIVARI-TLBO; same grouped landscape-aware controller without relay & 30 seeds \\
External classical & $D=10,30,50,100$; eight classical functions & ZIVARI-TLBO, TLBO, MCTLBO, DE, PSO, GWO, WOA, HHO & 30 seeds \\
Engineering & five constrained designs & Eleven implemented population-based methods; focal interpretation emphasizes ZIVARI-TLBO, MCTLBO, and DE & 30 seeds \\
\bottomrule
\end{tabular}
\caption{Audited experimental scopes. Every run uses population size 60 and a maximum of 10,000 objective evaluations; lower objective values and ranks are better.}
\label{tab:experimental-protocol}
\end{table}

\section{Classical Benchmark Results}
\subsection{Direct Ablation of the Evaluated-Elite Relay}
The direct ablation isolates the paper-facing contribution more closely than comparisons with different optimizers. ZIVARI-TLBO and the relay-disabled condition use the same grouped landscape-aware controller, matched seeds, boundary handling, stopping rule, and 10,000-evaluation budget. Table~\ref{tab:direct-relay-ablation} reports paired results from the perspective of ZIVARI-TLBO.

\begin{table}[ht]
\centering
\small
\begin{tabular}{lrrrrrr}
\toprule
Scope & W & T & L & Wilcoxon $p$ & Holm $p$ & RBC \\
\midrule
10 & 187 & 8 & 45 & 2.032e-14 & 6.097e-14 & 0.579 \\
30 & 196 & 0 & 44 & 1.580e-20 & 6.320e-20 & 0.692 \\
50 & 173 & 2 & 65 & 7.938e-14 & 3.175e-13 & 0.559 \\
100 & 172 & 1 & 67 & 3.323e-18 & 1.661e-17 & 0.649 \\
ALL DIMS & 728 & 11 & 221 & 3.528e-62 & 1.411e-61 & 0.624 \\
\bottomrule
\end{tabular}
\caption{Direct relay ablation under matched seeds and equal 10,000-evaluation budgets. W/T/L and rank-biserial correlation (RBC) are reported from the perspective of ZIVARI-TLBO against the same controller without relay.}
\label{tab:direct-relay-ablation}
\end{table}

Across all dimensions, ZIVARI-TLBO records 728/11/221 wins/ties/losses, with Holm-adjusted $p=1.411\times10^{-61}$ and rank-biserial correlation 0.624. Its two-condition average rank is 1.235, compared with 1.765 without relay. The advantage remains positive at every tested dimension. Figure~\ref{fig:direct-relay-ablation} shows the dimensional W/T/L pattern.

\begin{figure}[ht]
\centering
\includegraphics[width=0.95\linewidth]{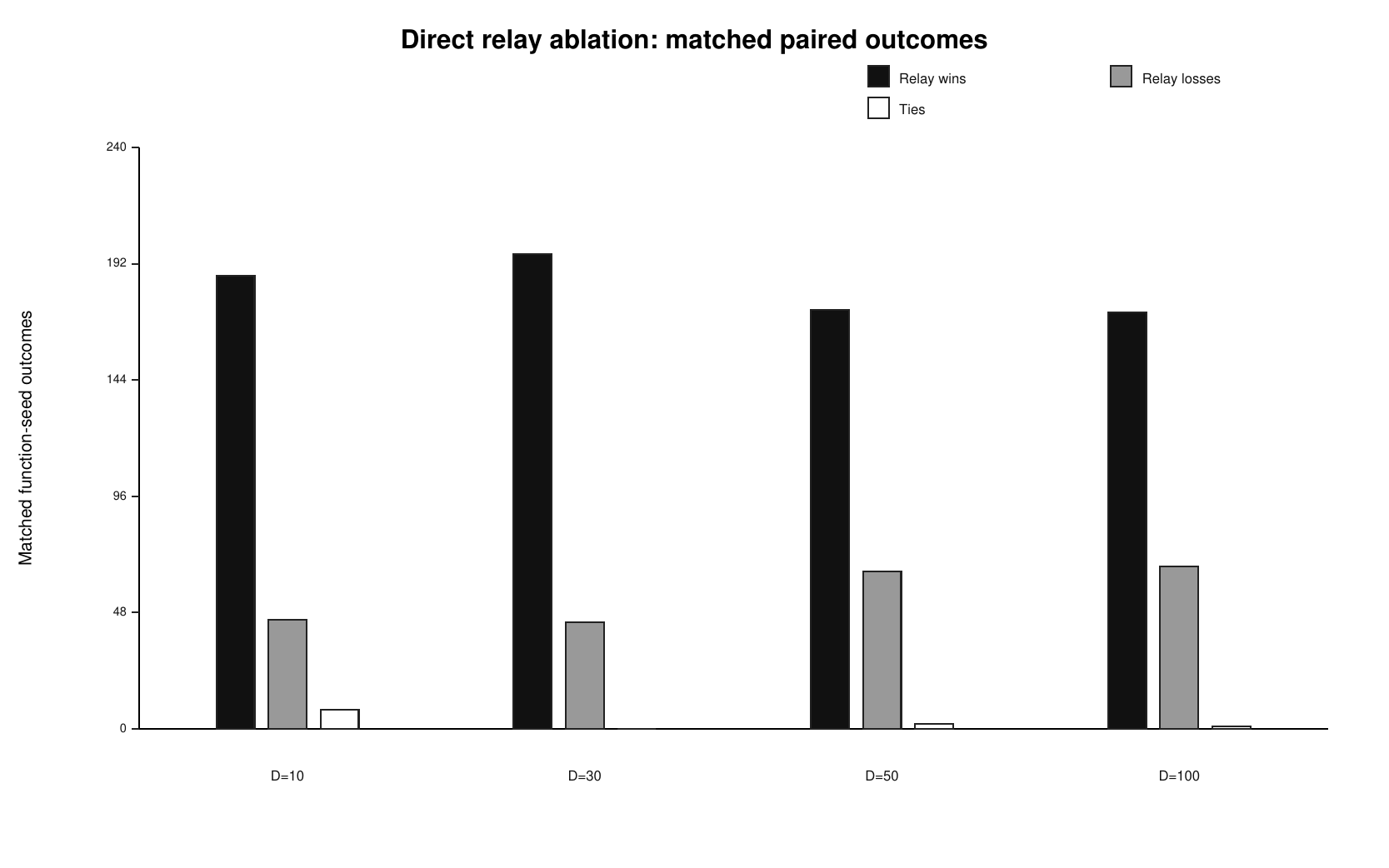}
\caption{Direct evaluated-elite relay ablation under matched seeds and equal 10,000-evaluation budgets. Wins, ties, and losses are counted from the perspective of ZIVARI-TLBO against the same controller without relay.}
\label{fig:direct-relay-ablation}
\end{figure}

This controlled result supports the relay's contribution within the tested implementation and protocol. It is not a universal causal proof: an exact copy interacts with the inherited controller, grouping, functions, dimensions, and fixed schedule, and the experiment does not establish that every relay policy or every landscape benefits from elite transfer.

\subsection{Multi-Dimensional External Baseline Evaluation}
Table~\ref{tab:external-multidim-ranks} reports average ranks for eight methods across the four tested dimensions, and Table~\ref{tab:external-multidim-pairwise} reports matched all-dimensions comparisons from the perspective of ZIVARI-TLBO.

\begin{table}[ht]
\centering
\small
\begin{tabular}{lrrrrr}
\toprule
Method & $D=10$ & $D=30$ & $D=50$ & $D=100$ & All dimensions \\
\midrule
WOA & 4.006 & 3.046 & 2.565 & 2.040 & 2.914 \\
ZIVARI-TLBO & 3.421 & 3.300 & 3.408 & 3.400 & 3.382 \\
HHO & 5.004 & 3.344 & 3.044 & 2.469 & 3.465 \\
MCTLBO & 3.962 & 3.983 & 3.908 & 3.850 & 3.926 \\
GWO & 4.152 & 3.677 & 4.121 & 5.000 & 4.237 \\
TLBO & 4.871 & 5.125 & 5.046 & 5.071 & 5.028 \\
PSO & 4.258 & 6.204 & 6.713 & 7.217 & 6.098 \\
DE & 6.325 & 7.321 & 7.196 & 6.954 & 6.949 \\
\bottomrule
\end{tabular}
\caption{Average ranks in the multi-dimensional external-baseline evaluation. Lower ranks are better; ranks summarize the tested functions, seeds, dimensions, and equal-budget protocol only.}
\label{tab:external-multidim-ranks}
\end{table}

\begin{table}[ht]
\centering
\small
\begin{tabular}{lrrrrr}
\toprule
Comparator & W & T & L & Holm $p$ & RBC \\
\midrule
TLBO & 796 & 0 & 164 & 1.175e-68 & 0.656 \\
MCTLBO & 731 & 0 & 229 & 2.521e-29 & 0.422 \\
DE & 929 & 0 & 31 & 6.914e-151 & 0.978 \\
PSO & 772 & 0 & 188 & 7.393e-51 & 0.563 \\
GWO & 577 & 0 & 383 & 3.064e-46 & 0.536 \\
WOA & 285 & 1 & 674 & 1.627e-07 & -0.200 \\
HHO & 342 & 1 & 617 & 1.665e-01 & -0.052 \\
\bottomrule
\end{tabular}
\caption{Paired multi-dimensional external-baseline comparisons from the perspective of ZIVARI-TLBO. Positive RBC favors ZIVARI-TLBO; Holm correction is applied across comparators within the all-dimensions scope.}
\label{tab:external-multidim-pairwise}
\end{table}

WOA obtains the best overall average rank, 2.914, followed by ZIVARI-TLBO at 3.382 and HHO at 3.465. ZIVARI-TLBO significantly outperforms TLBO, MCTLBO, DE, PSO, and GWO after Holm adjustment. The comparison favors WOA, and the difference from HHO is not significant after Holm adjustment. Figure~\ref{fig:external-multidim-ranks} makes the dimension-dependent pattern visible: ZIVARI-TLBO is comparatively stable, while WOA and HHO improve their rank at higher dimensions in this suite.

\begin{figure}[ht]
\centering
\includegraphics[width=0.95\linewidth]{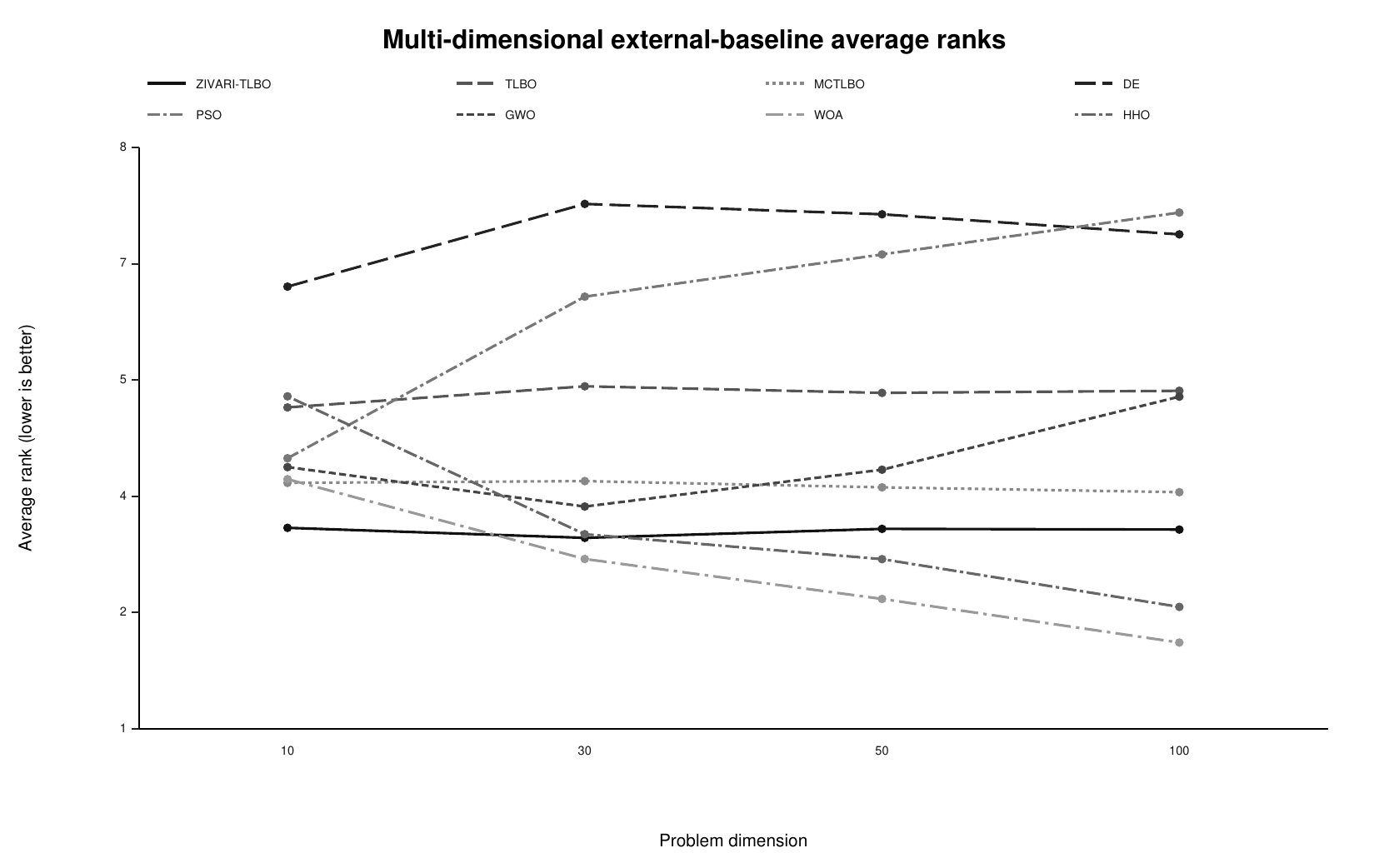}
\caption{Average ranks in the multidimensional external-baseline evaluation. Lower is better. Lines connect separate dimension-specific rankings and do not imply continuous dimensional behavior.}
\label{fig:external-multidim-ranks}
\end{figure}

These results strengthen external comparative coverage but do not establish comprehensive modern state-of-the-art performance. Verified CMA-ES, SHADE, L-SHADE, jSO, and CEC evaluations remain absent.

Figures~\ref{fig:convergence-rastrigin} and \ref{fig:rastrigin-boxplot} illustrate one audited function rather than the entire benchmark suite. Exact zeros on the logarithmic ordinate are displayed one decade below the smallest positive plotted value.

\begin{figure}[ht]
\centering
\includegraphics[width=0.95\linewidth]{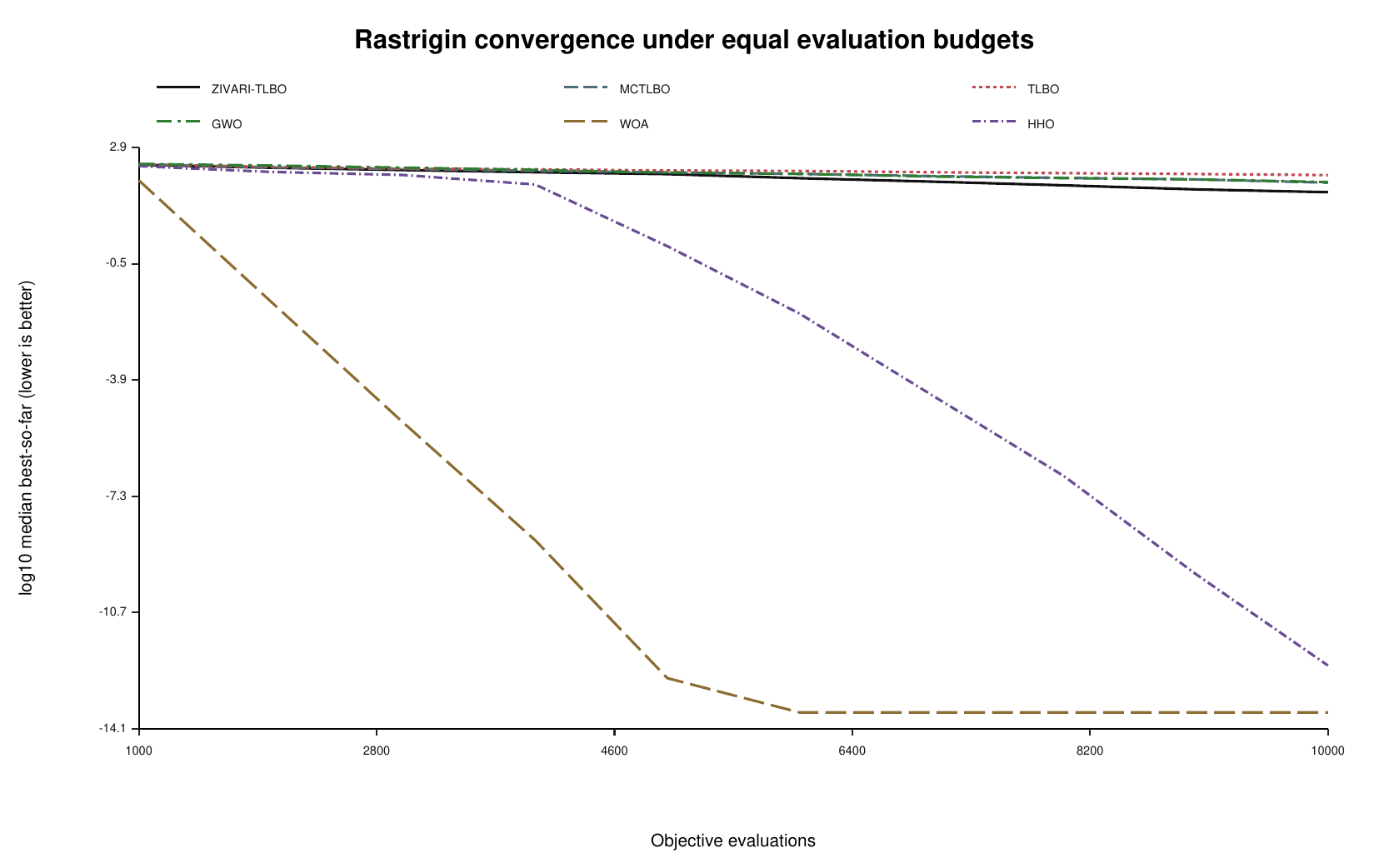}
\caption{Median best-so-far convergence on Rastrigin at dimension 30 under equal evaluation budgets. The ordinate is logarithmic and lower is better; exact zeros are shown one decade below the smallest positive plotted value.}
\label{fig:convergence-rastrigin}
\end{figure}

\begin{figure}[ht]
\centering
\includegraphics[width=0.95\linewidth]{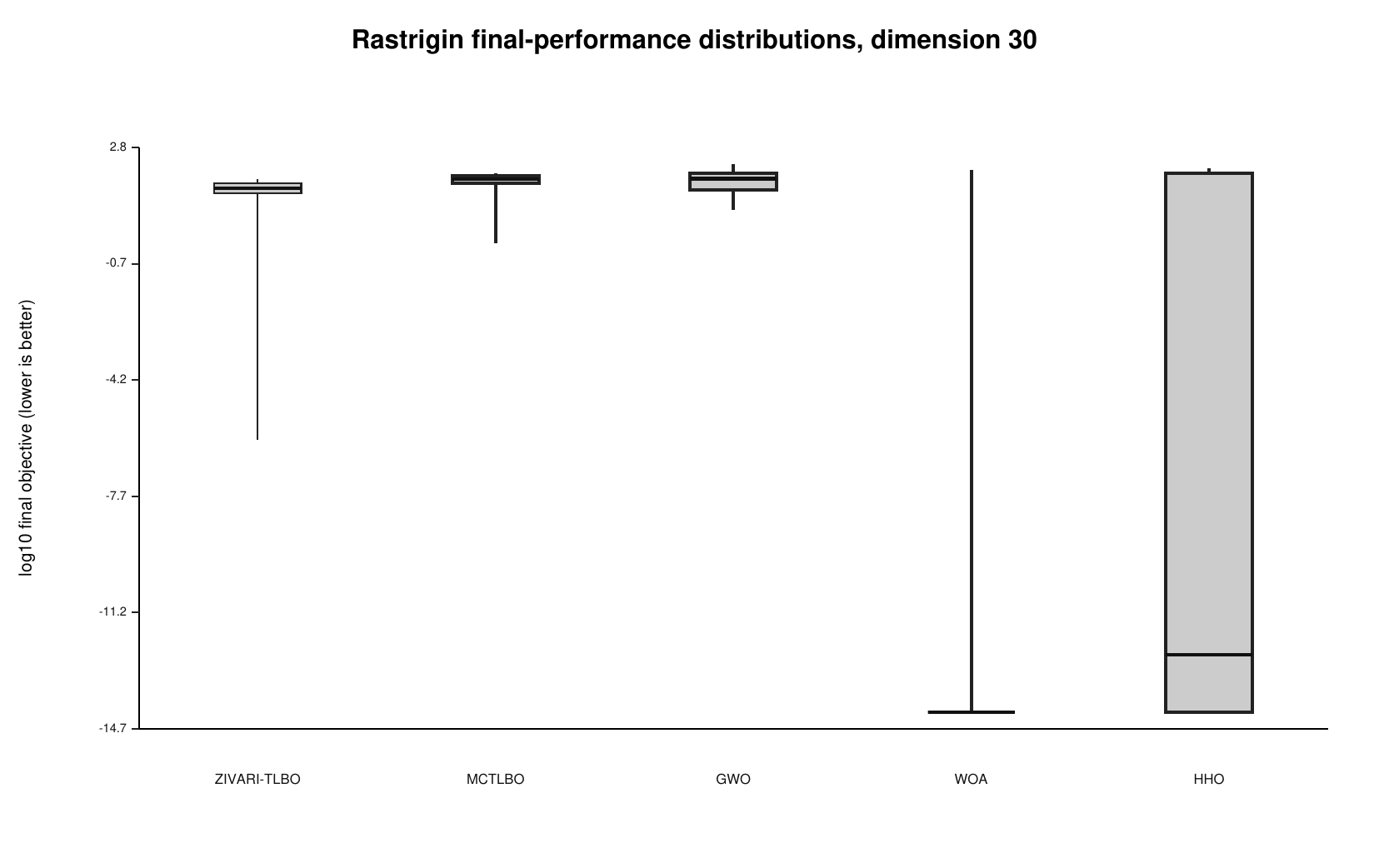}
\caption{Final-performance distributions on Rastrigin at dimension 30 across 30 runs and 10,000 evaluations. The ordinate is logarithmic and lower is better; exact zeros are shown one decade below the smallest positive plotted value.}
\label{fig:rastrigin-boxplot}
\end{figure}

\section{Engineering Design Results}
Table~\ref{tab:zivari-engineering-ranks} reports the overall penalized engineering ranks, while Appendix Tables~\ref{tab:zivari-engineering-summary} and \ref{tab:engineering-feasibility-aware} provide problem-level and feasibility-aware results. ZIVARI-TLBO obtains the second-best overall penalized rank in the current validation, behind DE. The paired comparison with MCTLBO is mixed and not statistically significant after Holm adjustment, whereas the comparison with DE favors DE.

\begin{table}[ht]
\centering
\small
\begin{tabular}{p{0.38\linewidth}rrr}
\toprule
Method & Avg rank seed & Avg rank median & Median rank seed \\
\midrule
DE & 2.25 & 1.8 & 1 \\
ZIVARI-TLBO & 3.137 & 2.1 & 2.75 \\
MCTLBO & 3.357 & 2.7 & 3 \\
GTS-v2 no-focus & 5.263 & 5.6 & 6 \\
TLBO & 5.377 & 6 & 5 \\
GTS grouped-only & 5.533 & 5.2 & 5.5 \\
GTS-v3.1 landscape-aware & 5.7 & 6 & 6 \\
PSO & 7.043 & 7.2 & 8 \\
GWO & 9.253 & 9.4 & 9 \\
HHO & 9.507 & 9.8 & 10 \\
WOA & 9.58 & 10.2 & 11 \\
\bottomrule
\end{tabular}
\caption{Overall engineering design rank statistics across five constrained problems. Lower ranks are better.}
\label{tab:zivari-engineering-ranks}
\end{table}

Figure~\ref{fig:engineering-feasibility} shows that feasibility behavior depends strongly on the problem and constraint formulation. In particular, low feasibility rates on several problems require caution when interpreting small objective differences.

\begin{figure}[ht]
\centering
\includegraphics[width=0.95\linewidth]{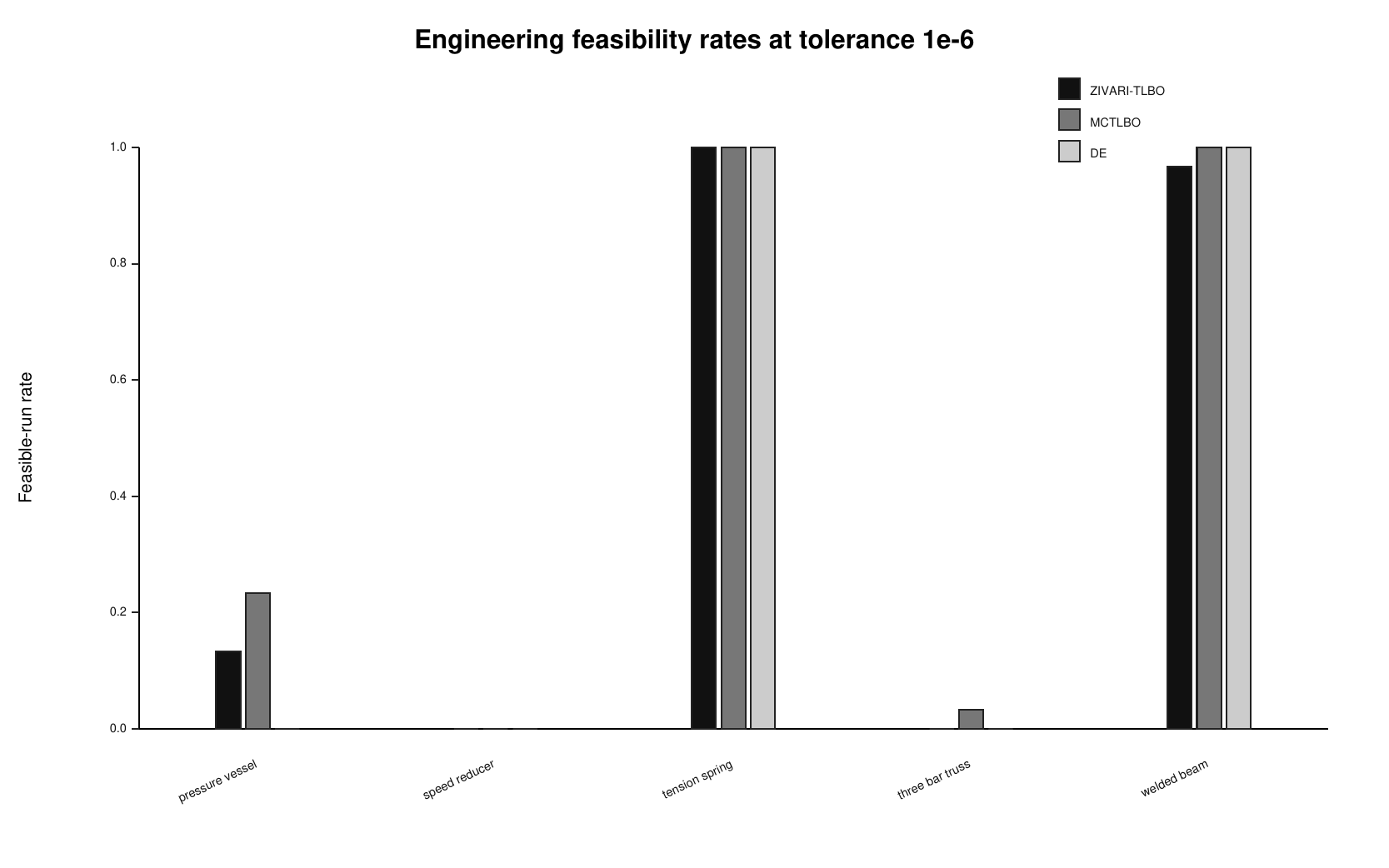}
\caption{Feasible-run rates at tolerance $10^{-6}$ for ZIVARI-TLBO, MCTLBO, and DE on the five audited engineering problems under the current static-penalty formulation.}
\label{fig:engineering-feasibility}
\end{figure}

\subsection{Feasibility-Aware Interpretation of Engineering Results}
Under the strict $10^{-6}$ tolerance, ZIVARI-TLBO obtains feasible solutions in 29/30 welded-beam runs and 30/30 tension-spring runs, but only 4/30 pressure-vessel runs and no feasible speed-reducer or three-bar-truss runs. Consequently, low raw objective values from infeasible runs must not be interpreted as valid engineering improvements. Under the external Deb-style feasibility-first analysis, DE obtains the best average rank (4.933), ZIVARI-TLBO ranks second (5.200), and MCTLBO ranks third (5.293). This ranking is descriptive of the audited outputs and does not replace the optimizers' static-penalty search behavior.

The engineering evidence is therefore mixed and constraint-handling sensitive. It supports reporting competitive behavior on some problems, but not general engineering superiority. Stronger engineering validation would require feasibility-preserving or adaptive constraint handling, independently verified formulations, and substantially better feasible-run rates on the currently difficult problems \cite{deb2000constraints,coello2002constraints}.

\section{Statistical Analysis}
Pairwise comparisons use matched function-seed observations for the classical layers and matched problem-seed observations for engineering. The Q1-upgrade analysis reports two-sided Wilcoxon signed-rank tests using a tie-corrected asymptotic normal approximation without continuity correction, wins/ties/losses, Holm-adjusted $p$-values, and rank-biserial effect sizes \cite{derrac2011stats}. Zero differences are excluded from the nonzero Wilcoxon count and remain visible as ties. Previously archived statistical outputs whose execution mode cannot be recovered remain explicitly marked as unspecified rather than silently reinterpreted.

For the eight-method multidimensional comparison, a Friedman test over 960 complete function-seed blocks rejects equal rank distributions ($\chi^2_F=2256.787$, $df=7$, $p<10^{-300}$ under the chi-square approximation). Pairwise Wilcoxon tests with Holm correction are reported separately. The direct relay ablation has two conditions, so a Friedman test is not applied to that comparison. Statistical significance is interpreted with ranks, W/T/L, effect sizes, and benchmark coverage, following established recommendations for multiple algorithm comparisons \cite{demsar2006stats,derrac2011stats}. Figure~\ref{fig:average-ranks} retains separate rank summaries for the earlier TLBO-family and engineering validation layers; they are not pooled into a global claim.

\begin{figure}[ht]
\centering
\includegraphics[width=0.95\linewidth]{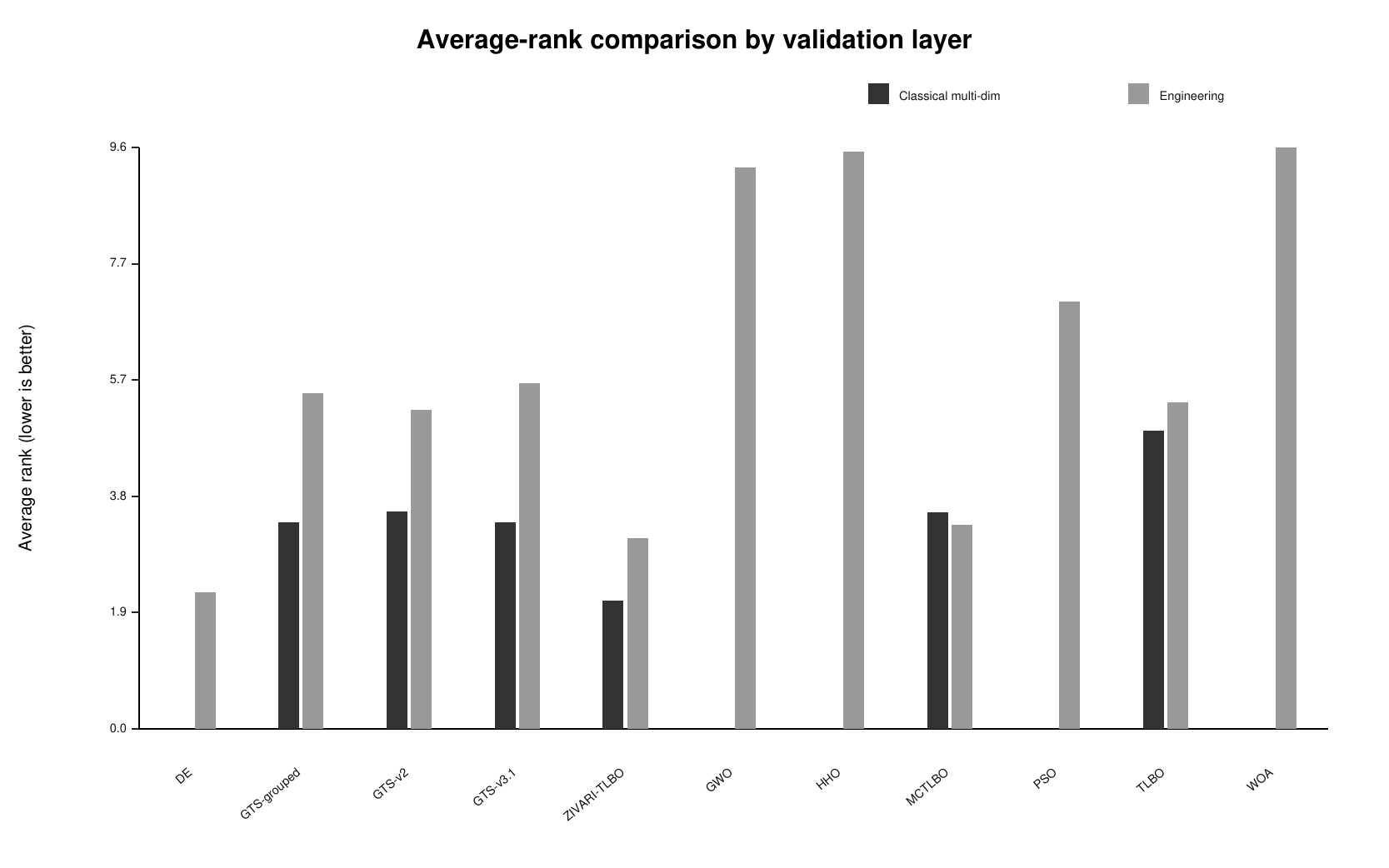}
\caption{Average-rank summaries for the multidimensional classical TLBO-family layer and the engineering layer. The algorithm sets differ, so the two series are not a single pooled ranking.}
\label{fig:average-ranks}
\end{figure}

\section{Discussion}
\subsection{Interpretation of the Relay}
The audited results suggest that evaluated-elite relay can improve information sharing without spending additional objective-function calls on the relay itself. Under the reported evaluation budgets, this is meaningful because a soft or noisy transfer would have to consume part of the same finite budget. The fixed ring topology also avoids benchmark-name rules and avoids ranking source--target pairs adaptively at relay time.

The mechanism should not be interpreted as creating a better solution at the instant of transfer. An exact copy cannot improve the global best because the copied elite was already evaluated. Its plausible benefit is indirect: the receiving group begins subsequent local updates from a stronger internal reference. The matched relay-disabled ablation provides strong evidence that this interaction is beneficial across the adopted classical protocol, but it does not constitute a general causal proof outside that protocol.

\subsection{Strengths and Mixed Evidence}
The direct ablation is the strongest evidence for the evaluated-elite relay contribution because it controls the inherited grouped controller and evaluation protocol. The broader eight-method results are more heterogeneous: ZIVARI-TLBO ranks second overall, WOA ranks first, and HHO is statistically indistinguishable from ZIVARI-TLBO after Holm adjustment. Engineering evidence is also mixed and limited by low feasibility on several problems. These negative and mixed findings delimit the supported claims and reveal where stronger validation is needed.

\subsection{Development Diagnostics}
Earlier GTS variants are retained as diagnostic evidence, not as additional proposed methods. GTS-v3.1 established the inherited benchmark-agnostic landscape-aware controller; later experiments indicated that simple activation and short-horizon intervention success were not sufficient selection criteria. The final paper therefore freezes one interpretable mechanism and shifts emphasis from further versioning to validation, direct ablation, diagnostics, and claim safety.

\section{Future Work}
\subsection{Toward Future V5 Extensions}
The present study freezes ZIVARI-TLBO as \texttt{gts-v4-cm-fixed}; no V5 method is implemented or validated. A separate future line may investigate adaptive relay scheduling in which relay frequency, sender selection, or receiver selection responds to diversity, stagnation, improvement rate, or persistent regime diagnostics. Perturbative, soft, or noisy relay would be a distinct method because modified vectors generally require additional objective-function evaluations.

The more immediate priority is stronger independent validation of the frozen method: verified CEC integration with traceable definitions and data \cite{cec2022}, modern external baselines such as CMA-ES and success-history DE variants, alternative constraint-handling strategies, hardware-controlled runtime experiments, and real-world black-box cases. Any future extension would require a separate implementation, direct ablation, equal-budget benchmarking, and independent statistical validation. Broader robustness is a research objective, not an established result.

\section{Limitations}
The evidence does not support a universal superiority claim. Although the external comparison now covers dimensions 10, 30, 50, and 100, it omits verified CMA-ES, SHADE, L-SHADE, jSO, and other modern comparators. WOA obtains a better overall rank than ZIVARI-TLBO, and HHO is not significantly different after Holm adjustment. The direct ablation isolates the relay within the tested grouped controller, but it does not establish global convergence or universal causal benefit. The engineering analysis uses one static-penalty formulation, and several problem--method combinations have low or zero strict-feasibility rates. Runtime hardware details are unspecified. No verified CEC implementation, official data package, or validated wrapper exists in the repository; consequently, no CEC result or CEC superiority claim is reported.

\section{Reproducibility Statement}
The paper-facing method name is ZIVARI-TLBO and the frozen code alias is \texttt{gts-v4-cm-fixed}. Classical raw outputs cover eight functions, dimensions 10/30/50/100, eight focal external methods, 30 independent matched seeds, and equal 10,000-evaluation budgets. Direct-ablation outputs preserve the controlled comparison with the same landscape-aware grouped controller without relay. Engineering raw outputs cover five constrained problems with the same run count and evaluation budget, including objective, penalized objective, feasibility, violation, and design fields. The full reproducibility archive contains source data, a standardized long-form dataset, statistical outputs, figure and table manifests, direct-ablation protocol, run log, claim limitations, and scripts to regenerate analyses, figures, tables, and archive audits. Fields that cannot be recovered from archived evidence, including timing hardware and the execution mode of older archived Wilcoxon results, are marked as unspecified. The frozen \texttt{TI.py} SHA256 recorded before and after the new external-baseline runs is \texttt{0dd283a3084a7c07624155c99537ed11173b4908ecf92bce44fd596919b5d9d2}. Adaptive, noisy, soft, perturbative, and V5 relay variants are not part of the current validated method. CEC remains future work because no verified implementation is available.

\section{Conclusion}
ZIVARI-TLBO adds a fixed ring evaluated-elite relay to a grouped TLBO framework with an inherited population-state controller. The exact relay transfers stored solutions and objective values without additional objective-function calls, while all generated candidates remain subject to the common evaluation budget. A matched multidimensional ablation strongly supports the relay contribution within the adopted protocol. In the broader eight-method comparison, ZIVARI-TLBO ranks second rather than first, and feasibility-aware engineering findings remain mixed. The appropriate next step is independent review, verified CEC and modern-baseline validation, stronger constraint handling, and journal-template adaptation rather than another optimizer version.

\section*{Declaration of Generative AI and AI-Assisted Technologies}
During the preparation of this manuscript, the author used AI-assisted tools, including ChatGPT and Codex, to support language editing, structural refinement, LaTeX organization, reproducibility-package preparation, and claim-safety review. The proposed method, experimental interpretation, mathematical formulation, results, conclusions, and all final scientific claims were reviewed, verified, and approved by the author. No AI tool is listed as an author, and the author takes full responsibility for the accuracy, integrity, originality, and validity of the manuscript.

\appendix
\section{Detailed Validation Tables}
Appendix Table~\ref{tab:zivari-classical-dim-ranks} reports the original TLBO-family dimension-wise ranks. Appendix Tables~\ref{tab:zivari-engineering-summary} and \ref{tab:engineering-feasibility-aware} report detailed engineering and feasibility-aware summaries.

\small
\begin{longtable}{p{0.08\linewidth}p{0.43\linewidth}rr}
\caption{Dimension-wise classical benchmark rank statistics for the frozen ZIVARI-TLBO validation layer.}\label{tab:zivari-classical-dim-ranks}\\
\toprule
Dim & Method & Avg rank seed & Avg rank median \\
\midrule
\endfirsthead
\toprule
Dim & Method & Avg rank seed & Avg rank median \\
\midrule
\endhead
10 & ZIVARI-TLBO & 2.104 & 1.625 \\
10 & MCTLBO & 3.404 & 3.5 \\
10 & GTS grouped-only & 3.438 & 3.438 \\
10 & GTS-v3.1 landscape-aware & 3.471 & 3.812 \\
10 & GTS-v2 no-focus & 3.871 & 3.875 \\
10 & TLBO & 4.713 & 4.75 \\
30 & ZIVARI-TLBO & 1.933 & 1.625 \\
30 & GTS-v2 no-focus & 3.458 & 3.25 \\
30 & MCTLBO & 3.483 & 3.375 \\
30 & GTS-v3.1 landscape-aware & 3.513 & 3.375 \\
30 & GTS grouped-only & 3.6 & 4.25 \\
30 & TLBO & 5.013 & 5.125 \\
50 & ZIVARI-TLBO & 2.208 & 2 \\
50 & GTS grouped-only & 3.269 & 3.5 \\
50 & GTS-v3.1 landscape-aware & 3.373 & 3.625 \\
50 & GTS-v2 no-focus & 3.483 & 3.375 \\
50 & MCTLBO & 3.629 & 3.375 \\
50 & TLBO & 5.037 & 5.125 \\
100 & ZIVARI-TLBO & 2.225 & 2.375 \\
100 & GTS-v3.1 landscape-aware & 3.246 & 3 \\
100 & GTS grouped-only & 3.312 & 3.75 \\
100 & GTS-v2 no-focus & 3.525 & 3.375 \\
100 & MCTLBO & 3.779 & 3.625 \\
100 & TLBO & 4.912 & 4.875 \\
\bottomrule
\end{longtable}

\small
\begin{longtable}{p{0.16\linewidth}p{0.24\linewidth}rrrr}
\caption{Engineering design validation summary. Constraint feasibility is reported using the 1e-6 tolerance column; lower objective and rank are better.}\label{tab:zivari-engineering-summary}\\
\toprule
Problem & Method & Best obj. & Median obj. & Feas. 1e-6 & Med. rank \\
\midrule
\endfirsthead
\toprule
Problem & Method & Best obj. & Median obj. & Feas. 1e-6 & Med. rank \\
\midrule
\endhead
pressure vessel & DE & 5.882e+03 & 5.882e+03 & 0 & 1 \\
pressure vessel & ZIVARI-TLBO & 5.882e+03 & 5.883e+03 & 0.133 & 2 \\
pressure vessel & MCTLBO & 5.881e+03 & 5.893e+03 & 0.233 & 3 \\
pressure vessel & GTS-v3.1 landscape-aware & 5.882e+03 & 5.898e+03 & 0.2 & 4 \\
pressure vessel & GTS grouped-only & 5.882e+03 & 5.920e+03 & 0.367 & 5 \\
pressure vessel & TLBO & 5.880e+03 & 5.964e+03 & 0.5 & 6 \\
pressure vessel & GTS-v2 no-focus & 5.882e+03 & 5.998e+03 & 0.333 & 7 \\
pressure vessel & PSO & 5.882e+03 & 6.196e+03 & 0 & 8 \\
pressure vessel & GWO & 6.031e+03 & 6.427e+03 & 0.8 & 9 \\
pressure vessel & HHO & 6.253e+03 & 6.806e+03 & 0.367 & 10 \\
pressure vessel & WOA & 6.446e+03 & 4.622e+04 & 0.233 & 11 \\
speed reducer & ZIVARI-TLBO & 2.996e+03 & 2.996e+03 & 0 & 1.5 \\
speed reducer & MCTLBO & 2.996e+03 & 2.996e+03 & 0 & 1.5 \\
speed reducer & TLBO & 2.996e+03 & 2.996e+03 & 0 & 3 \\
speed reducer & DE & 2.996e+03 & 2.996e+03 & 0 & 4 \\
speed reducer & GTS-v2 no-focus & 2.996e+03 & 2.996e+03 & 0 & 5 \\
speed reducer & GTS grouped-only & 2.996e+03 & 2.996e+03 & 0 & 6 \\
speed reducer & GTS-v3.1 landscape-aware & 2.996e+03 & 2.996e+03 & 0 & 7 \\
speed reducer & WOA & 2.996e+03 & 2.996e+03 & 0 & 8 \\
speed reducer & HHO & 2.996e+03 & 3.000e+03 & 0 & 9 \\
speed reducer & GWO & 3.005e+03 & 3.014e+03 & 0.7 & 10 \\
speed reducer & PSO & 2.996e+03 & 3.036e+03 & 0 & 11 \\
tension spring & DE & 0.013 & 0.013 & 1 & 1 \\
tension spring & GTS-v2 no-focus & 0.013 & 0.013 & 1 & 2 \\
tension spring & MCTLBO & 0.013 & 0.013 & 1 & 3 \\
tension spring & ZIVARI-TLBO & 0.013 & 0.013 & 1 & 4 \\
tension spring & GTS-v3.1 landscape-aware & 0.013 & 0.013 & 1 & 5 \\
tension spring & GTS grouped-only & 0.013 & 0.013 & 1 & 6 \\
tension spring & TLBO & 0.013 & 0.013 & 1 & 7 \\
tension spring & PSO & 0.013 & 0.013 & 1 & 8 \\
tension spring & GWO & 0.013 & 0.013 & 1 & 9 \\
tension spring & WOA & 0.013 & 0.013 & 1 & 10 \\
tension spring & HHO & 0.013 & 0.013 & 1 & 11 \\
three bar truss & DE & 263.895 & 263.895 & 0 & 1 \\
three bar truss & ZIVARI-TLBO & 263.895 & 263.895 & 0 & 2 \\
three bar truss & MCTLBO & 263.895 & 263.895 & 0.033 & 3 \\
three bar truss & PSO & 263.895 & 263.895 & 0 & 4 \\
three bar truss & GTS grouped-only & 263.893 & 263.895 & 0.067 & 5 \\
three bar truss & GTS-v2 no-focus & 263.894 & 263.895 & 0.133 & 6 \\
three bar truss & GTS-v3.1 landscape-aware & 263.893 & 263.895 & 0.1 & 7 \\
three bar truss & TLBO & 263.893 & 263.895 & 0.1 & 8 \\
three bar truss & HHO & 263.896 & 263.972 & 0.067 & 9 \\
three bar truss & GWO & 263.887 & 264.024 & 0.867 & 10 \\
three bar truss & WOA & 263.899 & 265.094 & 0 & 11 \\
welded beam & ZIVARI-TLBO & 1.725 & 1.725 & 0.967 & 1 \\
welded beam & DE & 1.725 & 1.725 & 1 & 2 \\
welded beam & MCTLBO & 1.725 & 1.725 & 1 & 3 \\
welded beam & GTS grouped-only & 1.725 & 1.725 & 0.967 & 4 \\
welded beam & PSO & 1.725 & 1.725 & 0.933 & 5 \\
welded beam & TLBO & 1.725 & 1.725 & 0.967 & 6 \\
welded beam & GTS-v3.1 landscape-aware & 1.725 & 1.725 & 1 & 7 \\
welded beam & GTS-v2 no-focus & 1.725 & 1.725 & 1 & 8 \\
welded beam & GWO & 1.744 & 1.8 & 1 & 9 \\
welded beam & HHO & 1.739 & 2.046 & 1 & 10 \\
welded beam & WOA & 1.793 & 3.067 & 1 & 11 \\
\bottomrule
\end{longtable}

\begin{longtable}{p{0.20\linewidth}p{0.20\linewidth}rrrr}
\caption{Feasibility-aware engineering summary. Feasible objective statistics exclude infeasible runs; NA indicates that no run met the strict $10^{-6}$ feasibility tolerance.}\label{tab:engineering-feasibility-aware}\\
\toprule
Problem & Method & Feasible & Rate & Feasible best & Mean violation \\
\midrule
\endfirsthead
\toprule
Problem & Method & Feasible & Rate & Feasible best & Mean violation \\
\midrule
\endhead
pressure vessel & DE & 0/30 & 0.000 & NA & 5.068e-04 \\
pressure vessel & GTS grouped-only & 11/30 & 0.367 & 5903 & 2.970e-04 \\
pressure vessel & GTS-v2 no-focus & 10/30 & 0.333 & 5894 & 3.207e-04 \\
pressure vessel & GTS-v3.1 & 6/30 & 0.200 & 5900 & 4.250e-04 \\
pressure vessel & ZIVARI-TLBO & 4/30 & 0.133 & 6014 & 3.828e-04 \\
pressure vessel & GWO & 24/30 & 0.800 & 6031 & 2.563e-04 \\
pressure vessel & HHO & 11/30 & 0.367 & 6560 & 2.866e-04 \\
pressure vessel & MCTLBO & 7/30 & 0.233 & 5898 & 4.816e-04 \\
pressure vessel & PSO & 0/30 & 0.000 & NA & 4.730e-04 \\
pressure vessel & TLBO & 15/30 & 0.500 & 5889 & 4.055e-04 \\
pressure vessel & WOA & 7/30 & 0.233 & 6446 & 5.317e-04 \\
speed reducer & DE & 0/30 & 0.000 & NA & 1.390e-04 \\
speed reducer & GTS grouped-only & 0/30 & 0.000 & NA & 1.385e-04 \\
speed reducer & GTS-v2 no-focus & 0/30 & 0.000 & NA & 1.386e-04 \\
speed reducer & GTS-v3.1 & 0/30 & 0.000 & NA & 1.378e-04 \\
speed reducer & ZIVARI-TLBO & 0/30 & 0.000 & NA & 1.391e-04 \\
speed reducer & GWO & 21/30 & 0.700 & 3005 & 1.492e-04 \\
speed reducer & HHO & 0/30 & 0.000 & NA & 1.960e-04 \\
speed reducer & MCTLBO & 0/30 & 0.000 & NA & 1.390e-04 \\
speed reducer & PSO & 0/30 & 0.000 & NA & 9.144e-05 \\
speed reducer & TLBO & 0/30 & 0.000 & NA & 1.390e-04 \\
speed reducer & WOA & 0/30 & 0.000 & NA & 1.665e-04 \\
tension spring & DE & 30/30 & 1.000 & 0.01267 & 1.155e-09 \\
tension spring & GTS grouped-only & 30/30 & 1.000 & 0.01267 & 0 \\
tension spring & GTS-v2 no-focus & 30/30 & 1.000 & 0.01267 & 1.060e-08 \\
tension spring & GTS-v3.1 & 30/30 & 1.000 & 0.01267 & 1.508e-10 \\
tension spring & ZIVARI-TLBO & 30/30 & 1.000 & 0.01267 & 3.599e-10 \\
tension spring & GWO & 30/30 & 1.000 & 0.01269 & 0 \\
tension spring & HHO & 30/30 & 1.000 & 0.01267 & 7.334e-09 \\
tension spring & MCTLBO & 30/30 & 1.000 & 0.01267 & 2.282e-10 \\
tension spring & PSO & 30/30 & 1.000 & 0.01267 & 1.083e-09 \\
tension spring & TLBO & 30/30 & 1.000 & 0.01267 & 0 \\
tension spring & WOA & 30/30 & 1.000 & 0.01267 & 1.761e-09 \\
three bar truss & DE & 0/30 & 0.000 & NA & 6.597e-06 \\
three bar truss & GTS grouped-only & 2/30 & 0.067 & 263.9 & 7.635e-06 \\
three bar truss & GTS-v2 no-focus & 4/30 & 0.133 & 263.9 & 5.211e-06 \\
three bar truss & GTS-v3.1 & 3/30 & 0.100 & 263.9 & 6.228e-06 \\
three bar truss & ZIVARI-TLBO & 0/30 & 0.000 & NA & 6.402e-06 \\
three bar truss & GWO & 26/30 & 0.867 & 263.9 & 5.977e-06 \\
three bar truss & HHO & 2/30 & 0.067 & 264 & 5.978e-06 \\
three bar truss & MCTLBO & 1/30 & 0.033 & 263.9 & 6.593e-06 \\
three bar truss & PSO & 0/30 & 0.000 & NA & 6.596e-06 \\
three bar truss & TLBO & 3/30 & 0.100 & 263.9 & 6.519e-06 \\
three bar truss & WOA & 0/30 & 0.000 & NA & 6.865e-06 \\
welded beam & DE & 30/30 & 1.000 & 1.725 & 7.132e-09 \\
welded beam & GTS grouped-only & 29/30 & 0.967 & 1.725 & 9.347e-08 \\
welded beam & GTS-v2 no-focus & 30/30 & 1.000 & 1.725 & 0 \\
welded beam & GTS-v3.1 & 30/30 & 1.000 & 1.725 & 1.735e-08 \\
welded beam & ZIVARI-TLBO & 29/30 & 0.967 & 1.725 & 7.155e-08 \\
welded beam & GWO & 30/30 & 1.000 & 1.744 & 0 \\
welded beam & HHO & 30/30 & 1.000 & 1.739 & 0 \\
welded beam & MCTLBO & 30/30 & 1.000 & 1.725 & 4.458e-08 \\
welded beam & PSO & 28/30 & 0.933 & 1.725 & 1.013e-07 \\
welded beam & TLBO & 29/30 & 0.967 & 1.725 & 3.640e-08 \\
welded beam & WOA & 30/30 & 1.000 & 1.793 & 6.943e-09 \\
\bottomrule
\end{longtable}

\bibliographystyle{plain}
\bibliography{references}

\end{document}